\documentclass[10pt,letterpaper]{article}
\usepackage{booktabs,amsmath,amssymb,amsthm,array,tabularx,graphicx,float}
\usepackage{mmla_technical_report}
\usepackage{orcidlink}
\usepackage{placeins,needspace}
\graphicspath{{figures/}}
\newcommand{\mmla}{\textsc{MMLA}}
\newcommand{\sg}{\operatorname{sg}}
\newcommand{\rank}{\operatorname{rank}}
\newcommand{\E}{\mathbb{E}}
\newcommand{\R}{\mathbb{R}}
\newcommand{\cL}{\mathcal{L}}
\newcolumntype{Y}{>{\raggedright\arraybackslash}X}
\theoremstyle{definition}
\newtheorem{definition}{Definition}[section]
\theoremstyle{plain}
\newtheorem{proposition}[definition]{Proposition}

\title{LARC: Low-Rank Adaptive Residual Connections\\for Learning in Frozen Models}
\author{%
\textbf{Junyi Zou}\textsuperscript{1,*\,\orcidlink{0009-0009-1367-7428}}\quad\textbullet\quad
\textbf{Avrova Donz}\textsuperscript{1,2,*\,\orcidlink{0009-0009-0100-0719}}\\[-0.06em]
{\fontsize{7.3pt}{8.2pt}\selectfont \textsuperscript{1}MMLA-org\quad
\textsuperscript{2}Communication University of China (CUC), No. 1 Dingfuzhuang East Street, Chaoyang District, Beijing 100024, China}\\[-0.10em]
{\small \textsuperscript{*}Equal contribution}\\[-0.10em]
{\scriptsize Email:\enspace\href{mailto:lior.j.zou@gmail.com}{\nolinkurl{lior.j.zou@gmail.com}}
\enspace$\cdot$\enspace\href{mailto:avrovadonz@icloud.com}{\nolinkurl{avrovadonz@icloud.com}}}}
\date{September 29, 2026}
\hypersetup{pdftitle={LARC: Low-Rank Adaptive Residual Connections for Learning in Frozen Models},
pdfauthor={Junyi Zou and Avrova Donz},
pdfsubject={Low-rank learning state, episodic initialization, and feedback adaptation},
pdfkeywords={low-rank adaptation, residual connections, fast weights, meta-learning, reasoning-time training}}

\begin{document}
\makemmlatitle{%
Low-Rank Adaptive Residual Connections (LARC) give a frozen model a compact numerical state that can learn from feedback. The map $h+BAh$ adds a low-rank correction to a hidden representation. A slow state $\rho$ learns starting factors across tasks; a private fast state $\Phi$ copies them, changes with feedback, and resets to the trained initialization. This report specifies an input-side realization of the numerical policy carrier in Memory-Mediated Learning Architecture and examines its factor-space dynamics and learning lifetime.

\medskip\noindent We study a rank-4 input residual with 12,288 trainable parameters on a frozen MiniCPM5-1B-SFT substrate. In a four-candidate program-selection task, two feedback-gradient steps reduce expected query execution error by 24.65 and 36.65 percentage points relative to resetting to the respective trained static and post-adaptation initializations. These development results cover 16 parameter groups and three paired training seeds. A direct support-loss selection rule is much more accurate, reaching 0.78125\% error. In a repository-balanced chronological replay of public continuous-integration jobs, retaining online updates raises half-Brier loss from 0.1274 to 0.1808. A fixed follow-up intervention records same-batch non-descent and inconsistent future benefit from shrinking updates. Together, the algebra and measurements distinguish residual capacity, adaptation relative to a starting point, and usefulness on later decisions.
}

\begin{center}
\begin{minipage}{.96\textwidth}
\includegraphics[width=\linewidth]{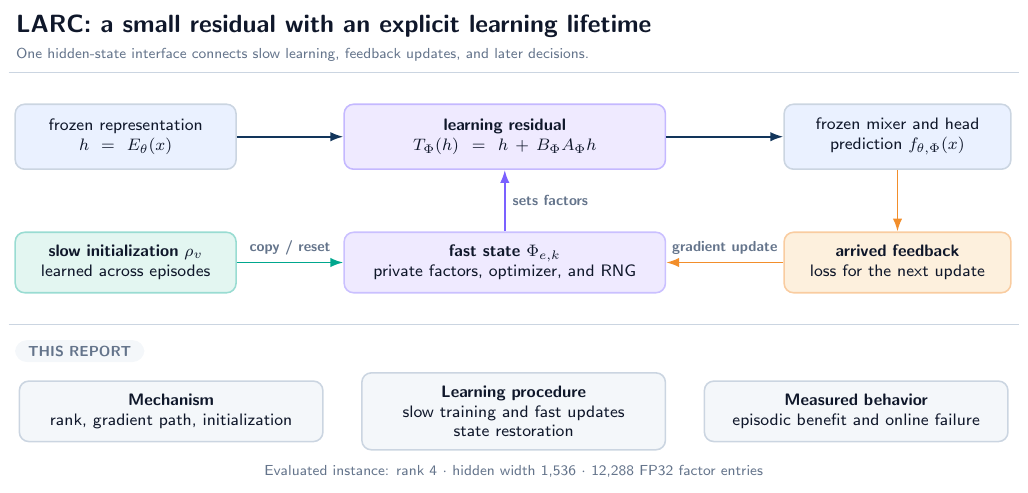}
\captionsetup{hypcap=false}
\captionof{figure}{\textbf{LARC at a glance.} Slow training determines the starting factors; arrived feedback updates a private fast state; later predictions read that state through a frozen mixer. Copy, reset, and resume give these numerical changes a defined lifetime.}
\label{fig:overview}
\end{minipage}
\end{center}
\clearpage

\section{Introduction}
Feedback often arrives while a task is still in progress. A program execution reveals that a proposed transformation is wrong. A completed workflow provides evidence about a later job. The next decision can use this observation through additional context, a stored record, or a change in the parameters that participate in the computation. LARC implements this last path through a small residual attached to a hidden representation. The question is whether retaining its feedback updates improves a later read, and under which learning conditions.

Consider a learner given four complete programs, one of which is the target. It assigns a probability to each candidate, and an executor returns all four errors on the available examples. A gradient update can reduce the probability-weighted error even when the next prompt and the most likely candidate remain unchanged. This setting makes a numerical feedback effect measurable without requiring program generation. It also raises concrete implementation questions: where is the change stored, how does it reach the next prediction, how long is it retained, and what behavior returns when it is removed? A learning state needs an initialization, an update process, a reader, and a lifetime.

Residual connections preserve a reference computation through an identity path~\cite{he2015deepresiduallearningimage}. LARC\footnote{Layer-wise Adaptive Rate Control is a separate use of the acronym LARC for an optimizer; see NVIDIA's official documentation~\cite{nvidiaLARCdocumentation}.} uses LoRA's two-factor low-rank parameterization~\cite{hu2021loralowrankadaptationlarge} as a hidden-state correction $h\mapsto h+BAh$. Its factors start from a learned slow state $\rho$ and are updated as a fast state $\Phi$. Ending an episode or applying a reset restores the bound $\rho$.

The separation between $\rho$ and $\Phi$ gives the learner two timescales, following the distinction between slow parameters and fast weights~\cite{ba2016usingfastweightsattend}. Slow training shapes a starting representation across tasks. Fast updates respond to observations in the current episode. Resetting to the trained $\rho$ removes those fast updates while preserving what slow training has learned. For this input residual, learning requires a gradient through the downstream mixer: its weights stay fixed, but its response to a changed input determines how the factors should change.

\mmla{} separates a numerical policy carrier from an authoritative memory carrier and assigns them distinct writers and reset domains~\cite{zou2026mmlamemorymediatedlearningarchitecture}. LARC supplies a concrete realization of the policy carrier. Its interface is hidden-to-hidden; the mixer that follows it is a separate architectural choice. The present implementation reuses a Transformer language-model mixer as the frozen substrate. The residual interface and its learning lifetime carry the definition, while this mixer supplies the computation used in the experiments.

Our central empirical comparison trains two initializations using the same feedback, queries, residual placement, and paired random seeds. One objective evaluates queries at the slow initialization. The other evaluates them after two feedback updates, using a first-order MAML-style estimator. For each trained seed, retaining real-feedback updates reduces expected query error on held-out development parameter groups. The advantage of one training objective over the other is less consistent across seeds, and the direct symbolic rule remains much more accurate than either arm. We then follow the static initialization into a task-matched continuous-integration study, where preserving online updates is harmful relative to reset on the recorded queue. The contrast shows why feedback responsiveness, the quality of the starting policy, and sustained online benefit need separate measurements.

This report specifies and implements a rank-4 input residual with slow initialization and episode-local fast state, including its gradient path through a frozen mixer and its reset and resume operations. It compares static and post-adaptation training with paired initializations, then measures real and sham feedback crossed with keep and reset reads. The factor-space analysis explains how an input residual changes the computation and why identical factor products can have different subsequent updates. Six episodic initializations and three task-matched initializations were trained and saved along this sequence.

\mmlanote{Section~\ref{sec:method} describes the executable mechanism and learning procedure. Section~\ref{sec:algebra} explains its factor-space dynamics, and Section~\ref{sec:causal} defines the feedback comparisons. Sections~\ref{sec:episodic}--\ref{sec:diagnostic} follow the experiments in order. The implementation section and appendices give the configuration, task semantics, and restoration details.}

\section{LARC as a Learning State}
\label{sec:method}
\subsection{Residual interface}
Let $h\in\R^d$ be a hidden vector at an insertion boundary and let $F_\theta$ denote the downstream frozen computation. A rank-$r$ LARC applies
\begin{equation}
 T_\Phi(h)=h+B_\Phi A_\Phi h,
 \qquad A_\Phi\in\R^{r\times d},\quad B_\Phi\in\R^{d\times r},
 \qquad f_{\theta,\Phi}(x)=F_\theta(T_\Phi(E_\theta(x))).
 \label{eq:residual}
\end{equation}
The first map, $A_\Phi$, reads $r$ learned linear combinations of the hidden coordinates. The second, $B_\Phi$, writes their contribution back into the $d$-dimensional representation. Adding the original $h$ preserves the identity path. The correction lies in the column space of $B_\Phi$, although changing that correction can alter many outputs of the nonlinear mixer that follows it. Low rank describes the inserted linear correction, rather than the rank of the complete input-to-output computation.

For a sequence, the same factors act tokenwise. The factors are shared within one episode and isolated across episodes. We use $r=4$, $d=1536$, and a fixed residual scale of one. There is no bias or nonlinearity inside this residual. The 12,288 factor entries require 49,152 bytes in FP32, excluding optimizers and runtime activations. Applying the two linear maps uses $2dr=12{,}288$ multiply-accumulates per token, in addition to the frozen model computation and the residual addition.

\Needspace{5\baselineskip}
\begin{definition}[LARC state]
The slow state $\rho=(A_\rho,B_\rho,v)$ contains the learned initialization and its version $v$. The fast state $\Phi_{e,k}=(A_{e,k},B_{e,k})$ is a private copy for episode $e$ after $k$ feedback updates. Each episode binds one version of $\rho$, one update rule, and private optimizer and random-number state. Its reset restores that bound initialization.
\end{definition}

This definition fixes the reference against which adaptation is measured. Before training, we sample $A_\rho$ from $\mathcal N(0,0.02^2)$ and set $B_\rho=0$. After training, both factors may be nonzero. Starting a new episode copies the trained factors, so the initial episode behavior includes slow learning. Numerical state can be retained across feedback steps while the observed prompt remains fixed.

\begin{figure}[htbp]
\centering\includegraphics[width=\linewidth]{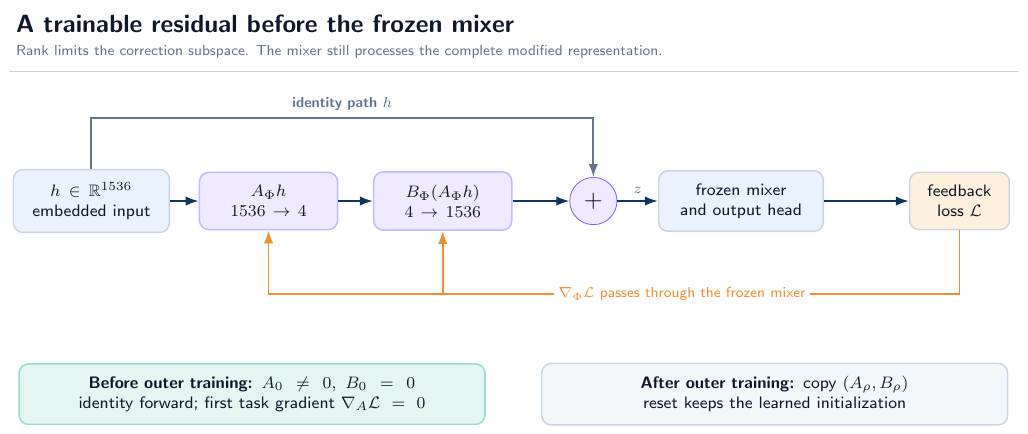}
\caption{\textbf{The input residual and its learning path.} The two factors read a four-dimensional projection and write a correction back into the full hidden representation. The orange path denotes backpropagation to the factors through the frozen downstream computation. Zero $B$ makes the initial forward map the identity; an episode reset later restores the trained factors.}
\label{fig:interface}
\end{figure}

\subsection{What changes, and on which timescale?}
Three parameter collections participate in the evaluated learner. The common substrate contains the pretrained weights and the always-enabled semantic adapter $\omega$. Their values are fixed throughout the study. The slow LARC factors $\rho$ change across outer training batches. The fast factors $\Phi$ change within an episode or online session. A factor tensor belongs to exactly one of these roles at a time: a fast update operates on an owned copy, rather than modifying the slow reference in place.

\begin{table}[htbp]
\centering\small
\begin{tabularx}{\linewidth}{lYYY}\toprule
State & What it contains & When it changes & What a fast reset does\\\midrule
Frozen substrate & Backbone and semantic adapter $\omega$ & Fixed during these experiments & Leaves it unchanged\\
Slow $\rho_v$ & Learned factors $(A_\rho,B_\rho)$ and version & One outer update after a batch closes & Uses it as the reference\\
Fast $\Phi_{e,k}$ & Episode factors and private update state & Arrived-feedback update & Copies the bound $\rho_v$\\
Explicit memory $M$ & Completed cases in the CI study & Real-label arrival under its own rule & Keeps its separate lifetime\\\bottomrule
\end{tabularx}
\caption{\textbf{The states read by the evaluated system.} Explicit memory is disabled in the episodic study. In CI it provides the same causal case history to the compared policy states; resetting the numerical residual does not erase that history.}
\label{tab:states}
\end{table}

This distinction is useful even without meta-learning. An ordinarily trained residual can be copied into an episode and adapted from feedback. Meta-learning changes how the reference is obtained; it does not create the distinction between a reference and a temporary state. Conversely, a successful outer training loss does not show that keeping subsequent fast updates helps. That question requires the keep/reset comparison developed in Section~\ref{sec:causal}.

\subsection{Two timescales and two outer objectives}
We use an initialization-based meta-learning formulation~\cite{finn2017modelagnosticmetalearningfastadaptation} to compare two outer objectives. For support feedback $S_e$ and an outer query $Q_e$, the fast update is
\begin{equation}
 \Phi_{e,0}=\operatorname{copy}(\rho_v),\qquad
 \Phi_{e,k+1}=\Phi_{e,k}-\eta\nabla_{\Phi_{e,k}}\cL_{S_e}(\Phi_{e,k}).
 \label{eq:inner}
\end{equation}
Our episodic experiments use two steps of SGD, $\eta=0.1$, with no momentum, decay, or inner gradient clipping. All episodes in an outer batch start from the same version of $\rho$. Their factors have distinct storage and random-number streams. Only after every episode finishes do we aggregate query losses and update the slow factors.

We compare
\begin{align}
 J_{\mathrm{static}}(\rho)&=\E_e\big[\cL_{Q_e}(\rho)\big],\\
 J_{\mathrm{adapted}}(\rho)&=\E_e\big[\cL_{Q_e}(U^2(\rho,S_e))\big].
 \label{eq:objectives}
\end{align}
Static training computes the same support updates for matched work and observations, but its query loss is evaluated at $\rho$. Adapted training evaluates the query at $\Phi_{e,2}$. We implement its first-order gradient through
\begin{equation}
 \widetilde\Phi_{e,2}=\rho+\sg(\Phi_{e,2}-\rho),\qquad
 \widehat{\nabla_\rho J}_{\mathrm{adapted}}
 =\frac{1}{|\mathcal B|}\sum_{e\in\mathcal B}
 \nabla_\Phi\cL_{Q_e}(\Phi)\big|_{\Phi=\Phi_{e,2}}.
 \label{eq:fo}
\end{equation}
Equation~\eqref{eq:fo} implements a first-order MAML-style estimator~\cite{nichol2018firstordermetalearningalgorithms}. Here $\sg$ is stop-gradient: the forward value is the adapted state, and the backward Jacobian to $\rho$ is the identity. The support trajectory determines where the query gradient is measured; that gradient is passed to the corresponding slow factor entries without differentiating through the two support updates. An exact MAML gradient would include their Jacobians. Detaching the fast state without the identity connection would instead disconnect the outer gradient from $\rho$.

Both objectives use AdamW~\cite{loshchilov2019decoupled} with learning rate $10^{-3}$, $(\beta_1,\beta_2)=(0.9,0.999)$, $\epsilon=10^{-8}$, zero weight decay, and outer gradient-norm clipping at one. Each outer batch contains two episodes. We train exactly 256 outer updates for each of three paired initializations in each objective, retaining all six final states.

\begin{figure}[htbp]
\centering\includegraphics[width=\linewidth]{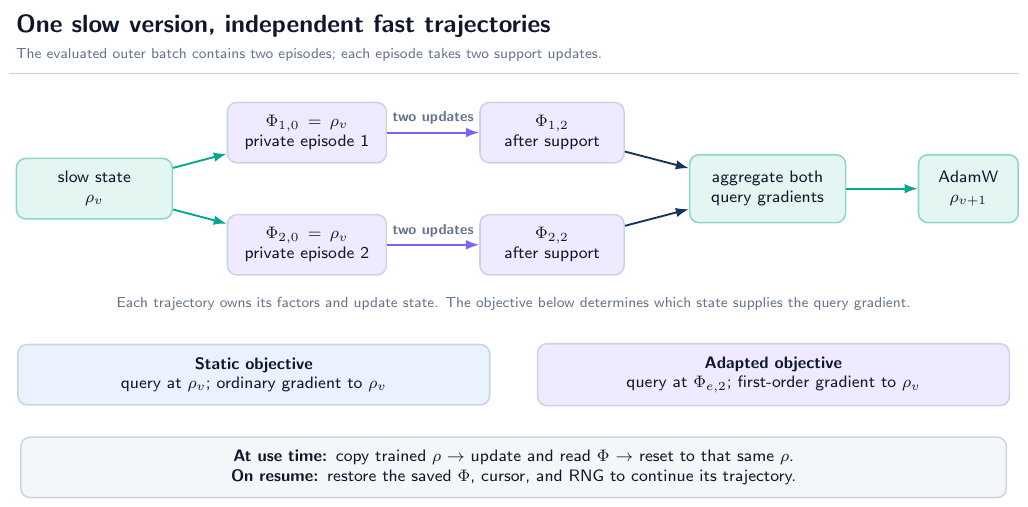}
\caption{\textbf{Slow learning and fast adaptation.} Every episode in a batch begins at the same slow version and follows its own support trajectory. The objective chooses whether its query is evaluated at $\rho_v$ or at the adapted factors. Query gradients are then aggregated into one slow update. At use time, the trained slow state stays fixed; fast resets and checkpoint restoration perform different operations.}
\label{fig:lifetimes}
\end{figure}

\subsection{Gradient flow, batching, and restoration}
The model weights are frozen while the downstream function remains differentiable with respect to its input. With $z=T_\Phi(h)$, an input-side gradient contains the Jacobian $\partial F_\theta(z)/\partial z$. Wrapping this computation in a no-gradient region removes the learning path. Our implementation caches only the frozen embeddings before LARC and uses non-reentrant activation checkpointing for the downstream mixer. The output head is evaluated at answer positions. Padding masks and position indices remain specific to each sequence.

An episodic physical batch has eight rows: two independent episodes, each with four candidate continuations. Each row selects its episode's own factors. A permutation of episode rows must permute the corresponding predictions without changing them. This isolation is also required for optimizers and random-number state; ordinary batching must not turn independent learners into one shared learner.

A slow checkpoint stores actual factor values, version, original initialization, outer optimizer, random-number state, model identity, and configuration. A live-session checkpoint additionally stores the fast factors, feedback cursor, pending feedback, and any separate explicit memory. Reset and resume serve different purposes. Reset restores the bound slow initialization; resume restores the current fast trajectory. We tested midpoint restoration and final reload as part of the recorded training and replay runs.

For example, an episode saved after its first support update must resume at $\Phi_{e,1}$ before taking its second update. Replacing that saved state with $\rho_v$ would replay a different trajectory. A reset intervention deliberately makes this replacement just before the designated query. In a delayed-feedback session, restoration must also retain which jobs have already been predicted and which labels have arrived, because those determine the information available to the next update.

\section{Properties of the Residual and Its Updates}
\label{sec:algebra}
The following statements describe the real-valued residual in Equation~\eqref{eq:residual}. They explain the parameterization and initialization, while the implemented model combines FP32 factors with BF16 activations.

\subsection{Relation to LoRA: placement and learning lifetime}
LoRA parameterizes a weight correction~\cite{hu2021loralowrankadaptationlarge}, writing a linear layer as $(W+\Delta W)h$ with $\Delta W=UV$. LARC uses the same low-rank factorization to transform its input. If the next operation is a linear map $W\in\R^{m\times d}$, then
\begin{equation}
 W T_\Phi(h)=W(h+B_\Phi A_\Phi h)=Wh+(WB_\Phi)A_\Phi h.
 \label{eq:lora}
\end{equation}
The effective correction $\Delta W=(WB_\Phi)A_\Phi$ has rank at most $r$, and its columns lie in the image of $W$. Conversely, for any rank-at-most-$r$ correction with columns in that image, choose a rank factorization $\Delta W=UV$ whose left factor lies in the image, padding with zeros to width $r$ if needed. Solving $WB_\Phi=U$ and setting $A_\Phi=V$ realizes that correction. When $W$ has full row rank, every rank-at-most-$r$ weight correction is representable. A square invertible $W$ is a special case, with $B_\Phi=W^{-1}U$. When $W$ is not full row rank, corrections outside its image remain inaccessible. These statements concern forward representations; the factor coordinates and their gradient updates can still differ.

Our evaluated residual precedes the full nonlinear mixer. The mixer must be evaluated on the changed representation; the linear identity above does not generally merge this transformation into an arbitrary internal LoRA site. Table~\ref{tab:lora} separates the placement of the correction from its learning lifetime.

\begin{table}[htbp]
\centering\small
\begin{tabularx}{\linewidth}{lYY}\toprule
Choice & Evaluated input LARC & Weight-space LoRA\\\midrule
Forward correction & $F_\theta((I+BA)h)$ & $(W+UV)h$ at selected linear maps\\
Linear overlap & $\Delta W=(WB)A$ & Free factored $\Delta W=UV$\\
Placement here & One hidden-to-hidden map before the full mixer & Depends on the chosen weight sites\\
Learning lifetime & Trained $\rho$; private, adapted $\Phi$; reset to $\rho$ & Can use the same initialization, adaptation, and reset procedure\\\bottomrule
\end{tabularx}
\caption{Placement and lifetime are separate choices. The linear overlap is given by Equation~\eqref{eq:lora}; the evaluated LARC precedes the full nonlinear mixer.}
\label{tab:lora}
\end{table}

\subsection{Identity initialization and the first learning step}
Let $C=BA$ and write the differentiable task loss as $\ell(C)$, with matrix gradient $G_C=\nabla_C\ell(C)$ under the Frobenius inner product. The chain rule gives
\begin{equation}
 \nabla_A\ell=B^\top G_C,\qquad \nabla_B\ell=G_C A^\top.
 \label{eq:gradient}
\end{equation}
\begin{proposition}[First update from an identity residual]
With simultaneous SGD, initial $B_0=0$, and initial $A_0$, the first update satisfies
\begin{equation}
 A_1=A_0,\qquad B_1=-\eta G_{C,0}A_0^\top,\qquad
 C_1=-\eta G_{C,0}A_0^\top A_0.
 \label{eq:first}
\end{equation}
If $A_0$ has independent zero-mean entries of variance $\sigma^2$, and $G_{C,0}$ depends on the factors only through $C_0=0$, then
$\E[C_1]=-\eta r\sigma^2G_{C,0}$.
\end{proposition}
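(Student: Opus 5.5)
The plan is to substitute the initial condition $B_0=0$ into the factor gradients of Equation~\eqref{eq:gradient}, then take an expectation entrywise. With simultaneous SGD, both factors are updated using gradients evaluated at the same point $(A_0,B_0)$, at which $C_0=B_0A_0=0$. Write $G_{C,0}=\nabla_C\ell(C)|_{C=0}$.

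The first gradient gives $\nabla_A\ell=B_0^\top G_{C,0}=0$, so $A_1=A_0$. The second gives $\nabla_B\ell=G_{C,0}A_0^\top$, so $B_1=-\eta G_{C,0}A_0^\top$. Multiplying the updated factors yields $C_1=B_1A_1=-\eta G_{C,0}A_0^\top A_0$, which establishes Equation~\eqref{eq:first}. The simultaneity assumption matters here. A sequential update that changed $B$ first and then used the new $B$ to update $A$ would give $A_1\neq A_0$. I would remark on this but not pursue it.

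For the expectation, the key observation is that $G_{C,0}$ depends on the factors only through $C_0=0$. Because $C_0=0$ for every realization of $A_0$ (given $B_0=0$), $G_{C,0}$ is a deterministic matrix that does not depend on $A_0$. It can therefore be pulled out of the expectation: $\E[C_1]=-\eta G_{C,0}\E[A_0^\top A_0]$.

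It remains to compute the $d\times d$ Gram expectation entrywise:
\[
(A_0^\top A_0)_{ij}=\sum_{k=1}^r (A_0)_{ki}(A_0)_{kj}.
\]
By independence and zero mean, each off-diagonal term ($i\neq j$) has expectation $0$. Each diagonal term has expectation $\sum_{k=1}^r\sigma^2=r\sigma^2$. Hence $\E[A_0^\top A_0]=r\sigma^2 I_d$, and $\E[C_1]=-\eta r\sigma^2G_{C,0}$.

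The only step needing care is justifying that $G_{C,0}$ can be pulled out of the expectation. This rests on the stated hypothesis that the loss depends on the factors only through $C$. Given that hypothesis, the deterministic value $C_0=0$ makes $G_{C,0}$ independent of $A_0$. I would state this explicitly, since the randomness of $A_0$ could otherwise correlate with the gradient. For example, any dependence of the data or noise on $A_0$ would break the argument.
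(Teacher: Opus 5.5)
Your proposal is correct and follows essentially the same route as the paper. The paper also substitutes $B_0=0$ into the factor gradients, multiplies the updated factors, and obtains $\E[A_0^\top A_0]=r\sigma^2 I$ entrywise, pulling $G_{C,0}$ out of the expectation because $C_0=0$ makes it independent of the sampled $A_0$ (the justification given in its appendix).
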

\begin{proof}
Substitute $B_0=0$ into Equation~\eqref{eq:gradient}. Then multiply the updated factors and use $\E[A_0^\top A_0]=r\sigma^2I$.
\end{proof}
The first step learns within the row space selected by $A_0$. In expectation, that initial update is proportional to the full matrix gradient, although each realization remains rank constrained. The zero first-step gradient of $A$ is therefore expected. A nonzero $B$ gradient additionally requires $G_{C,0}A_0^\top\ne0$; random initialization does not replace checking the actual task gradient. Setting both factors to zero yields zero task gradients for both and leaves this bilinear branch stationary under plain SGD.

This calculation concerns the first task-gradient update. The outer optimizer, clipping, and mixed-precision arithmetic can change the numerical step. Once slow training has made $B_\rho$ nonzero, both factors can receive a task gradient at the start of a new episode.

\subsection{Rank controls capacity, not displacement}
\begin{proposition}[Rank and norm]
At any state, $\rank(BA)\le r$ and
\begin{equation}
 \|T_\Phi(h)-h\|_2\le\|B\|_2\|A\|_2\|h\|_2.
 \label{eq:norm}
\end{equation}
For two states with rank-at-most-$r$ residuals, the rank of their difference is at most $2r$. This upper bound is attainable when $d\ge2r$.
\end{proposition}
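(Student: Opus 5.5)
The proposition has three claims, and I will prove them in order with elementary linear algebra, since each follows from standard facts about products and sums of matrices.

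\textbf{Rank bound.} The rank claim uses the factorization $BA$ with $A\in\R^{r\times d}$ and $B\in\R^{d\times r}$. The image of $BA$ is contained in the column space of $B$, which has dimension at most $r$. Hence $\rank(BA)\le r$.

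\textbf{Norm bound.} By Equation~\eqref{eq:residual}, $T_\Phi(h)-h=BAh$. Submultiplicativity of the operator $2$-norm gives $\|BAh\|_2\le\|B\|_2\|Ah\|_2\le\|B\|_2\|A\|_2\|h\|_2$, which is Equation~\eqref{eq:norm}. The statement is titled ``rank controls capacity, not displacement,'' and this step makes that concrete: the displacement is governed by the factor norms, which rank does not constrain.

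\textbf{Difference of two states.} For states with residuals $C=BA$ and $C'=B'A'$, subadditivity of rank gives $\rank(C-C')\le\rank C+\rank C'\le 2r$. A factored form gives the same conclusion: $C-C'=[B,\,-B']\,[A;A']$ factors through $\R^{2r}$.

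\textbf{Attainability.} This is the only step that needs a construction. Assume $d\ge 2r$ and let $e_1,\dots,e_d$ be the standard basis. Take $A$ with rows $e_1^\top,\dots,e_r^\top$ and $B$ with columns $e_1,\dots,e_r$, so that $BA=\sum_{i\le r}e_ie_i^\top$. Take $A'$ and $B'$ in the same way using indices $r+1,\dots,2r$. Then
\[
BA-B'A'=\operatorname{diag}(1,\dots,1,-1,\dots,-1,0,\dots,0),
\]
with $r$ entries equal to $1$, $r$ entries equal to $-1$, and the rest $0$. This matrix has rank exactly $2r$, and the construction needs $d\ge 2r$ coordinates to fit. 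Each residual individually has rank $r$, so both are admissible rank-at-most-$r$ states.

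The main obstacle is only to make sure the attainability example respects the shape constraints. I will also note that attainability can fail when $d<2r$, because then the rank of any $d\times d$ difference is at most $d<2r$.
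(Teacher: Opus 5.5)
Your proof is correct and follows essentially the same argument as the paper. It uses the product rank inequality, submultiplicativity of the operator $2$-norm, subadditivity of rank for the difference, and two rank-$r$ diagonal residuals on disjoint coordinate blocks to attain $2r$ when $d\ge 2r$; the factored form $[B,\,-B']\,[A;A']$ and your remark that attainability fails for $d<2r$ are correct extras.
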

\begin{proof}
The first two assertions follow from the rank inequality for products and submultiplicativity of the operator norm. For the difference, use $\rank(C_1-C_0)\le\rank(C_1)+\rank(C_0)$. Two rank-$r$ diagonal matrices supported on disjoint sets of coordinates attain $2r$.
\end{proof}
Rank four therefore bounds each instantaneous residual to rank four, while the change between two learned residuals can have rank eight. Multiplying one factor by an arbitrarily large constant preserves its rank and increases its norm. A small state is compatible with a large representation displacement. If the downstream map is $K$-Lipschitz on the relevant segment, Equation~\eqref{eq:norm} yields an output bound multiplied by $K$. Neither a uniform $K$ nor bounded factor norms follows from the rank choice alone.

\subsection{Equal forward maps can have different learning dynamics}
For any invertible $R\in\R^{r\times r}$, the factors $(RA,BR^{-1})$ represent the same $C$. Euclidean gradient steps generally depend on this choice of coordinates. With a positive scalar $c$, consider $A'=cA$, $B'=B/c$. At the common forward map, one simultaneous SGD step changes $C$ by
\begin{equation}
 C'_+-C=-\eta\left(c^{-2}BB^\top G_C+c^2G_C A^\top A\right)
             +\eta^2G_C A^\top B^\top G_C.
 \label{eq:gauge}
\end{equation}
These factor-space properties also apply to other bilinear low-rank updates, including LoRA. The first-order terms depend on $c$ even though the current prediction does not. Initialization scale, factor norms, optimizer state, and step size are therefore part of the learning specification. Saving only the product $BA$ is enough to reconstruct a forward map, but is generally insufficient to reproduce the next factor-space update.

For an $L$-smooth factor-space objective, the standard descent bound is
\begin{equation}
 \ell(\Phi-\eta\nabla\ell(\Phi))
 \le\ell(\Phi)-\eta\left(1-\frac{L\eta}{2}\right)\|\nabla\ell(\Phi)\|^2.
 \label{eq:descent}
\end{equation}
It requires smoothness along the step and $0<\eta<2/L$ to obtain a strict decrease for a nonzero gradient. Rank supplies neither condition. Moreover, a decrease on the arrived support batch does not determine the loss on later jobs. The experiments measure both quantities.

\section{Measuring Feedback Adaptation}
\label{sec:causal}
We evaluate numerical adaptation using a feedback intervention crossed with a state-retention intervention. Real feedback preserves its binding to candidate programs. Sham feedback independently permutes the four observed losses uniformly at each inner step, including natural fixed points. Both branches start from the same trained initialization, take the same number of updates, and then either keep the fast factors or reset to their own $\rho$ immediately before the query read.

Let $R_{z,a}$ be the query risk for feedback condition $z\in\{\mathrm{real},\mathrm{sham}\}$ and read state $a\in\{\mathrm{keep},\mathrm{reset}\}$. Define
\begin{align}
 G&=R_{\mathrm{real},\mathrm{reset}}-R_{\mathrm{real},\mathrm{keep}},\\
 D&=G-\left(R_{\mathrm{sham},\mathrm{reset}}-R_{\mathrm{sham},\mathrm{keep}}\right).
 \label{eq:gd}
\end{align}
Positive $G$ measures benefit from retaining real-feedback state relative to reset. Positive $D$ measures the corresponding difference from the permutation intervention. The two reset reads are identical for a deterministic query with the same substrate and initialization, so they can share one physical evaluation. Their logical identities remain distinct.

The query read receives the public prompt, candidate programs, frozen model, and designated factor state. It receives no feedback transcript, inner optimizer, live inner random stream, cached post-residual hidden representation, or stale key--value cache. Explicit memory is disabled in the episodic study. These conditions make retention of $\Phi$ the manipulated path from feedback to the query. A text-feedback baseline is evaluated separately with the feedback rendered in its prompt.

\begin{figure}[htbp]
\centering\includegraphics[width=\linewidth]{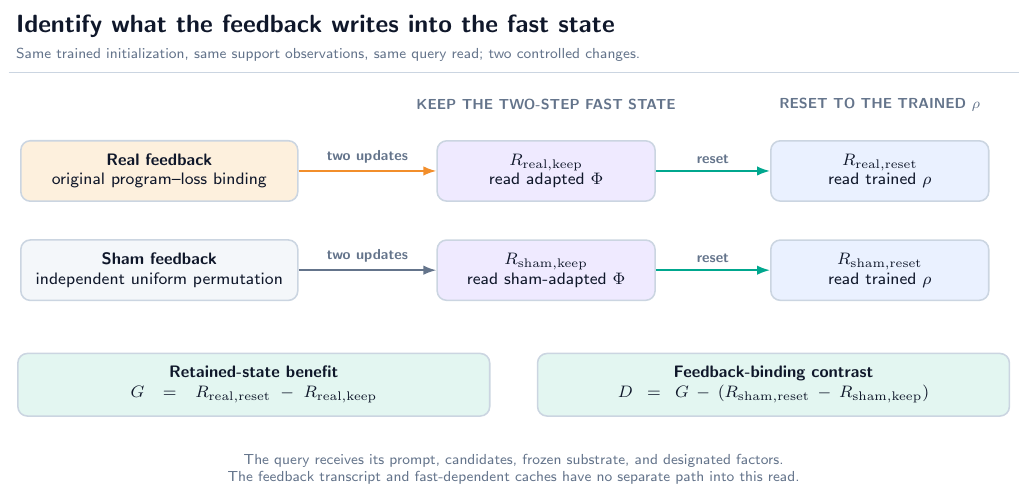}
\caption{\textbf{Feedback binding crossed with state retention.} Both feedback branches execute two updates. The reset read then restores its own trained initialization immediately before the query. With the remaining read inputs fixed, $G$ measures the effect of retaining real-feedback state, while $D$ compares this effect with independently permuted feedback.}
\label{fig:interventions}
\end{figure}

Under the declared query-read controls, these interventions estimate retention and feedback-binding effects. Keeping the state tests the effect of carrying the numerical update into the query. Permuting feedback tests the effect of its binding to the candidate programs. An update can respond to the correct feedback and still move farther from a good initial policy. The reset risk therefore remains the reference in $G$, with $D$ reported alongside it.

The outer-objective contrast is
\begin{equation}
 \Delta_{\mathrm{obj}}=R^{\mathrm{static}}_{\mathrm{real},\mathrm{keep}}
                     -R^{\mathrm{adapted}}_{\mathrm{real},\mathrm{keep}}.
\end{equation}
It asks a different question from $G$ and $D$. Similar outer objectives can both yield useful feedback adaptation, and a positive difference between real and permuted updates can coexist with harmful real updates. We retain all three comparisons in the results.

\FloatBarrier
\section{Grouped Program-Selection Study}
\label{sec:episodic}
\subsection{Model and task construction}
All six models share MiniCPM5-1B-SFT revision \texttt{a60b37f1fc409c54e1e337b0723aaac6f92dfec0} and an always-enabled, previously trained semantic adapter $\omega$. This frozen adapter uses rank-16 query/value additions in layers 16--23, with scale $32/16$. It is part of the common substrate. The input LARC has rank four and is the only component trained in this study. Thus the rank-16 semantic adapter supplies fixed task semantics, while the rank-4 input residual is the learning state under evaluation.

Each episode asks the model to choose among four complete public programs in either an arithmetic or a list-processing family. Programs are instantiated from the public configuration using four fixed recipes per family. The learner receives four input/output examples and an executor returns each candidate's support error fraction. A query contains 20 new inputs to the same transformation, disjoint from support. Query outcomes supervise outer training or subsequent scoring; they are absent from the inner feedback and policy prompt.

We group arithmetic tasks by the active pair $(\mathrm{scale},\mathrm{offset})$ and list tasks by $(\mathrm{minimum},\mathrm{limit})$. All four recipes for a parameter group stay in the same split. A conservative inventory excludes historical active pairs and the reachable support of the earlier generator. The arithmetic candidate space contains 350 pairs, of which 245 remain after exclusion. The list space contains 78 pairs, of which 34 remain. We allocate 34 per family: 16 training, eight development, eight reserved final, and two preflight groups. Development consequently has 16 independent parameter groups and 64 recipe episodes. The eight recipes are familiar; the held-out variation is in their active parameters.

The policy scores each complete candidate and its end-of-sequence token using mean token log probability, then applies a temperature-one softmax over the four scores:
\begin{equation}
 s_j(\Phi)=\frac{1}{|a_j|}\sum_{t=1}^{|a_j|}\log p_{\theta,\Phi}(a_{j,t}\mid x,a_{j,<t}),
 \qquad \pi_j(\Phi)=\frac{e^{s_j(\Phi)}}{\sum_{m=1}^4e^{s_m(\Phi)}}.
\end{equation}
For externally supplied execution losses $\ell_j(S)$,
\begin{equation}
 \cL_S(\Phi)=\sum_{j=1}^4\pi_j(\Phi)\ell_j(S).
 \label{eq:risk}
\end{equation}
The query loss uses the same expectation with query execution losses. Exact enumeration gives a differentiable finite-policy objective without sampling programs or passing gradients through a discrete executor. This is a program-selection evaluation.

\subsection{A concrete feedback update}
For illustration, take arithmetic scale 2, offset 3, and clipping to $[-16,16]$. The four candidate programs are
\begin{align*}
 f_0(x)&=2x+3,& f_1(x)&=2(x+3),\\
 f_2(x)&=2\operatorname{clip}(x)+3,& f_3(x)&=\operatorname{clip}(2x)+3.
\end{align*}
Suppose the target is $f_0$ and the four support inputs are $2,24,-20,5$. The target outputs are $7,51,-37,13$. Candidate $f_1$ disagrees on all four inputs; $f_2$ and $f_3$ each disagree on the two large-magnitude inputs. The executor therefore supplies the loss vector $(0,1,\tfrac12,\tfrac12)$, together with its candidate binding. The policy uses that vector through Equation~\eqref{eq:risk}.

The derivative with respect to a candidate energy is particularly simple:
\begin{equation}
 \frac{\partial\cL_S}{\partial s_j}
 =\pi_j\bigl(\ell_j(S)-\cL_S\bigr).
 \label{eq:energy-gradient}
\end{equation}
In energy coordinates, candidates with below-average execution error are favored. The actual optimizer changes $A$ and $B$, so its step is also shaped by the model Jacobian from those factors to all four energies. A finite factor update need not realize an independent increase or decrease of each energy. This connection between an execution error and the factor-space gradient is the learning mechanism tested here.

On the query read, the updated factors change candidate probabilities while the support losses remain outside the prompt. The evaluator then applies the candidate programs to held-out inputs; for $x=3$, the target output is 9. Resetting the factors first instead reads the trained starting policy. These are the real/keep and real/reset cells. The symbolic baseline uses the same support vector directly to select $f_0$, which also explains why that baseline is strong on this finite task.

\subsection{Analysis and selection rule}
The primary endpoint is real/keep expected query error. We average recipes within each parameter group, weight the two task families equally, and pair the three training seeds. The seeds are 2026092811, 2026092812, and 2026092813, shortened to 2811--2813 in tables. Each model completes all 256 outer updates before development predictions are generated. Development query scores are opened after predictions have been saved. Reserved final cases are not constructed or read in this study.

We use 10,000 paired bootstrap resamples of complete parameter groups, stratified by family, keeping all recipes, seeds, objectives, and intervention cells together. Five 99\% percentile intervals cover $\Delta_{\mathrm{obj}}$ and the two $G,D$ pairs, with nominal Bonferroni family coverage of 95\%. The materiality rule requires a point estimate of at least three percentage points and a lower interval bound above zero. Feedback eligibility additionally requires positive $G$ and $D$ for each training seed. These intervals quantify task-group variation conditional on the three trained seeds.

Among eligible objectives, the preregistered selection rule maximizes the worst-seed value of $\min(G,D)$. Measured training-plus-adaptation cost breaks a tie within $10^{-6}$, followed by the static objective if cost also ties. This rule chooses an objective's complete seed group rather than its best individual model.

\subsection{Results}
\begin{table}[ht]
\centering\small
\begin{tabular}{lrrrrr}\toprule
Objective & 2811 & 2812 & 2813 & Mean & Seed SD\\\midrule
Static & 21.90 & 31.47 & 34.53 & 29.30 & 6.59 \\
Adapted & 20.46 & 41.93 & 11.44 & 24.61 & 15.66 \\
\bottomrule\end{tabular}
\caption{Post-adaptation development error (\%), lower is better. Each column evaluates the same 16 parameter groups. Seed SD is the sample standard deviation across the three trained initializations.}
\label{tab:episodic}
\end{table}
The adapted objective achieves 24.61\% mean error versus 29.30\% for the static objective (Table~\ref{tab:episodic}). Its paired improvement is 4.69 percentage points with a 99\% task-group interval of [1.19, 8.88], satisfying the registered development rule. Across seeds, the improvements are 1.45, $-10.46$, and 23.09 points. Thus the mean objective advantage is accompanied by a reversal for one initialization and a larger seed standard deviation.

\begin{table}[ht]
\centering\small
\begin{tabular}{lrrrrr}\toprule
Contrast & Mean & 99\% group interval & 2811 & 2812 & 2813\\\midrule
$\Delta_{\mathrm{obj}}$ & 4.69 & [1.19, 8.88] & 1.45 & -10.46 & 23.09 \\
Static $G$ & 24.65 & [19.39, 29.26] & 31.43 & 20.16 & 22.35 \\
Static $D$ & 28.36 & [21.85, 34.93] & 37.76 & 26.78 & 20.55 \\
Adapted $G$ & 36.65 & [31.65, 41.20] & 40.52 & 18.64 & 50.78 \\
Adapted $D$ & 35.77 & [30.33, 41.01] & 40.21 & 19.97 & 47.13 \\
\bottomrule\end{tabular}
\caption{Paired gains in percentage points, positive is favorable. The bootstrap unit is a complete parameter group. Training seeds are kept paired within each resample.}
\label{tab:contrasts}
\end{table}
Both objectives yield positive feedback benefits (Table~\ref{tab:contrasts}). Static training obtains $G=24.65$ and $D=28.36$ points; adapted training obtains $G=36.65$ and $D=35.77$ points. Every seed has positive $G,D$, and all four contrasts satisfy the materiality and interval rule. The selected static group has a worst-seed $\min(G,D)$ of 20.16 points, compared with 18.64 for the adapted group. Selection therefore follows stability of the feedback gain, even though adapted training has lower mean error.

The larger $G$ of the adapted objective includes a different starting point. Its reset error is 61.26\%, versus 53.95\% for static training. The 12.00-point difference between their feedback gains decomposes as
\begin{equation}
 G_{\mathrm{adapted}}-G_{\mathrm{static}}
 =\underbrace{(61.26-53.95)}_{\text{reset difference }7.31}
 +\underbrace{(29.30-24.61)}_{\text{endpoint advantage }4.69}
 \quad\text{percentage points},
\end{equation}
with the displayed values rounded. The adapted objective starts worse and ends better on average. The objective contrast measures the latter advantage; it cannot be replaced by the full difference in feedback gains.

\begin{table}[ht]
\centering\small
\begin{tabular}{lrrrrrr}\toprule
Objective & Real/keep & Real/reset & Sham/keep & Sham/reset & Text & Rule\\\midrule
Static & 29.30 & 53.95 & 57.67 & 53.95 & 61.78 & 0.78 \\
Adapted & 24.61 & 61.26 & 60.38 & 61.26 & 61.03 & 0.78 \\
\bottomrule
\end{tabular}
\caption{Descriptive development errors (\%). Text renders the program-to-feedback mapping in the prompt at fixed $\rho$. Rule chooses the lowest-support-error program, with first-index tie breaking. All methods receive the same external support information.}
\label{tab:controls}
\end{table}
The symbolic support rule reaches 0.78125\% error, considerably below either learned policy (Table~\ref{tab:controls}). Exact enumeration makes that rule a strong comparator: it can directly use the same four execution losses that drive the gradient. LARC produces a numerical feedback effect in this setting, while program selection itself has little remaining error for a learned system to remove relative to this baseline. Figure~\ref{fig:episodic-results} places the objective comparison next to the retained-state and feedback-binding gains.

\begin{figure}[H]
\centering\includegraphics[width=\linewidth]{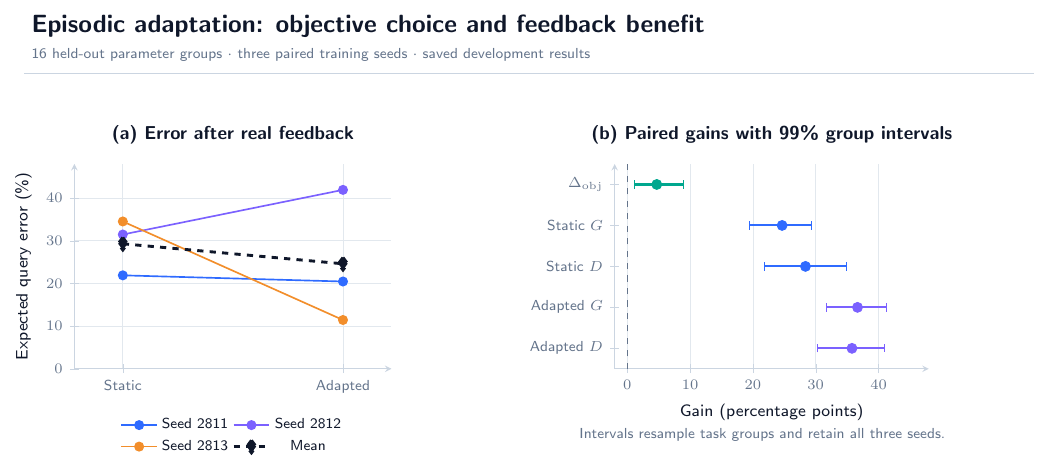}
\caption{\textbf{Two questions in the episodic study.} Left: each line joins one paired initialization under static and adapted training; the dashed black line joins their means. Right: the five recorded gains and 99\% task-group intervals. The adapted objective improves the mean query error, while both objectives support useful real-feedback adaptation. The intervals retain the three trained seeds in every resample.}
\label{fig:episodic-results}
\end{figure}

\begin{table}[H]
\centering\small
\begin{tabularx}{\linewidth}{llrrY}\toprule
Objective & Read state & Expected error & Greedy error & All query cases correct\\\midrule
Static & Reset & 53.95 & 47.76 & 42.71 \\
Static & Adapted state & 29.30 & 15.49 & 81.77 \\
Adapted & Reset & 61.26 & 61.72 & 30.21 \\
Adapted & Adapted state & 24.61 & 16.04 & 80.73 \\
\bottomrule\end{tabularx}
\caption{Additional descriptive development measures (\%), reconstructed from saved predictions. Greedy error evaluates the highest-probability supplied program; the last column is the fraction of episodes where that program passes all 20 query inputs. Group/family weights and paired seeds match the primary analysis. These are secondary measures, not free-form generation.}
\label{tab:episodic-secondary}
\end{table}
The expected-risk endpoint also differs from a greedy decision. After feedback, the static and adapted models have greedy query errors of 15.49\% and 16.04\%, respectively (Table~\ref{tab:episodic-secondary}). Their ordering is different from expected error. This distinction matters because changing probability mass across the four complete programs can improve expected risk without changing the most probable program.

\FloatBarrier

\section{Delayed Feedback in Public CI Workflows}
\label{sec:ci}
\subsection{Chronological replay and task-matched training}
The second study predicts whether a public continuous-integration job will finish as success, failure, cancelled, or other. A job is predicted at its recorded start time. Its label becomes available at completion. At a timestamp with both starts and completions, predictions precede feedback processing. Updates use the prompt saved at prediction time. This ordering prevents later outcomes or memory contents from entering an earlier prediction.

The training queue has 2,678 jobs from 60 repository--commit groups. The audit queue has 3,321 jobs from 72 groups: 83 NumPy jobs from one commit and 3,238 pandas jobs from 71 commits. The NumPy training and audit dates are May 14 and May 16, 2026; the pandas dates are September 17 and September 19. Time is ordered within each repository. Because the shared training pool includes September pandas jobs, this is a repository-local retrospective evaluation, rather than a global-calendar prospective deployment.

Each selected static initialization receives one epoch of task-matched supervised training, using both no-memory and causal-memory views of each job. The loss is categorical cross-entropy on the four candidate-label energies, weighted equally by repository, commit within repository, job within commit, and the two views. AdamW uses the episodic outer hyperparameters. Each seed completes 670 additional updates, moving from slow version 256 to 926. The resulting three slow states are fixed during audit.

The online policy uses SGD with step size 0.1 on arrived-label cross-entropy, with independent fast states for each path and repository. A separate memory stores at most 16 completed cases within 16 KiB and exposes at most two, preferring the same workflow and job name. It is updated from real completed labels in every relevant branch. The memory uses FIFO eviction; reading prefers the latest matching cases and fills any remaining slot with recent cases. Commit identifiers are exposed in the prompt, but do not impose a hard version filter. The policy-state intervention keeps online factors ($P_1$) or reads at the trained initialization ($P_0$). The memory intervention provides or omits these case records ($M_1$ or $M_0$). A permuted-label path draws a fresh, independent uniform permutation of the four labels for each arrived job using its private random stream, then applies the inverse permutation to that job's target. It retains the same real memory. There is no single fixed permutation shared by a completion batch.

All jobs completing at the same timestamp form one feedback batch. Stable identifiers determine serialization within a timestamp, without implying finer temporal resolution. For each active path, gradients are the mean cross-entropy over its arrived jobs; physical chunks accumulate that same mean. They are not divided by the number of paths or seeds and do not use the commit weights of the outer training objective. Each path then takes one functional SGD step with no momentum, decay, or gradient clipping. All factor updates finish before real cases are committed to memory.

At audit start, task-trained policy paths copy their own version-926 $\rho$. Case memory is initialized from each repository's completed training history. The original and calibrated reference policies first process the pooled training labels through their own fast updates, then fork into independent audit sessions. HEDGE4 also processes the same pooled training history before the audit. Historical supervision is therefore available to every comparator through its declared learning mechanism.

HEDGE4 is the four-expert comparator constructed for this study, using an exponential-weights mixture~\cite{freund1997decision}. Its experts are a smoothed workflow-frequency prior, a last-label predictor for the workflow/job key, a last-label predictor for the commit/workflow/job key, and hashed-feature logistic regression. It updates mixture weights using prediction-time losses after labels arrive. We evaluate this delayed-feedback combination by its observed prediction loss. Every method receives the same externally available history; their computation costs differ.

This replay extends the lifetime of the fast state. In the episodic study, two updates precede one designated query. Here each completed batch can update the same factors that will be used for many later jobs. The distribution of arriving labels can differ from the distribution of jobs currently starting, and their order is set by completion times. A locally fitted update can therefore carry evidence from a narrow or delayed part of the workload into a different prediction population.

\subsection{Prediction quality and retained state}
The primary prediction loss is the Brier score~\cite{brier1950verification} with a factor of $1/2$ normalization (half-Brier),
\begin{equation}
 b(p,y)=\tfrac12\sum_{c=1}^{4}(p_c-\mathbf1[c=y])^2.
\end{equation}
We average jobs within commits, commits within repositories, and the two repositories equally. Seed means and sample standard deviations are reported separately. The primary success rule requires HEDGE4 minus $P_1M_1$ to average at least 0.01 and be positive for all three seeds.

\begin{table}[ht]
\centering\small
\begin{tabular}{lrrrrr}\toprule
Path & 2811 & 2812 & 2813 & Mean & Seed SD\\\midrule
\texttt{P0M0} & 0.1291 & 0.1196 & 0.1372 & 0.1286 & 0.0088 \\
\texttt{P1M0} & 0.1667 & 0.1542 & 0.1970 & 0.1726 & 0.0220 \\
\texttt{P0M1} & 0.1162 & 0.1174 & 0.1487 & 0.1274 & 0.0184 \\
\texttt{P1M1} & 0.1839 & 0.1658 & 0.1927 & 0.1808 & 0.0137 \\
\texttt{PERM\_M1} & 0.3193 & 0.2867 & 0.3854 & 0.3305 & 0.0503 \\
\texttt{ORIGINAL} & 0.1681 & 0.1646 & 0.1633 & 0.1653 & 0.0025 \\
\texttt{CALIBRATED} & 0.1576 & 0.1594 & 0.1534 & 0.1568 & 0.0030 \\
\texttt{HEDGE4} & 0.1024 & 0.1024 & 0.1024 & 0.1024 & 0.0000 \\
\bottomrule\end{tabular}
\caption{Full CI replay, grouped half-Brier loss. \texttt{PERM\_M1} uses permuted fast-update labels and real memory. \texttt{ORIGINAL} retains the pre-CI slow initialization; \texttt{CALIBRATED} adds a four-class bias fitted on historical data. Both controls also receive historical and online fast updates.}
\label{tab:ci}
\end{table}
The full path scores 0.1808 compared with HEDGE4's 0.1024, a gain of $-0.0784$ that is negative for every seed. Reading at the trained initialization with real memory scores 0.1274. Consequently, retaining the real-feedback numerical state has $G=-0.0534$. Real feedback still outperforms permuted feedback by $D=0.1497$. These measurements separate two observations: correct feedback is less harmful than permuted feedback, while both continuous numerical updating and the full path remain worse than their respective comparators.

Without explicit memory, retention also increases loss, from 0.1286 to 0.1726. The memory effect and policy-by-memory interaction vary in sign across seeds. NumPy contributes one all-success commit; the equal-repository average gives that single group half the weight. A pandas commit receives weight $1/142$, so the single NumPy commit carries 71 times its weight. We report this queue descriptively, without treating training seeds as additional independent jobs or attaching a population confidence interval to the two-repository result.

\begin{table}[htbp]
\centering\small
\begin{tabular}{llrrrrrr}\toprule
Repository & Label & Jobs & Commits & Reset & Keep & Permuted & HEDGE4\\\midrule
NumPy & All & 83 & 1 & 0.0300 & 0.0273 & 0.2656 & 0.0277 \\
NumPy & Success & 83 & 1 & 0.0300 & 0.0273 & 0.2656 & 0.0277 \\
pandas & All & 3238 & 71 & 0.2248 & 0.3343 & 0.3954 & 0.1772 \\
pandas & Success & 2310 & 63 & 0.0401 & 0.2048 & 0.3926 & 0.0702 \\
pandas & Failure & 32 & 12 & 0.9386 & 0.7644 & 0.3842 & 0.7128 \\
pandas & Cancelled & 704 & 18 & 0.8475 & 0.6987 & 0.4050 & 0.5847 \\
pandas & Other & 192 & 57 & 0.2692 & 0.7327 & 0.4179 & 0.0574 \\
\bottomrule\end{tabular}
\caption{Saved CI half-Brier losses by repository and observed label, averaged over the three seeds. Rows labeled All weight commits equally within a repository; individual-label rows weight jobs equally within that label. Reset, Keep, and Permuted all use real memory. Label strata are descriptive and overlap in their commit membership.}
\label{tab:ci-strata}
\end{table}
The decomposition locates the retained-state harm primarily in pandas (Table~\ref{tab:ci-strata}). Keep slightly improves the one all-success NumPy commit, from 0.0300 to 0.0273. For pandas it raises commit-weighted loss from 0.2248 to 0.3343. Its job-weighted losses are lower on failures and cancellations but higher on successes and other outcomes. These strata describe where the error occurs; they do not identify which update caused it.

\begin{table}[htbp]
\centering\small
\begin{tabular}{lrrrr}\toprule
Aggregation & Reset & Keep & Permuted & HEDGE4\\\midrule
Repository then commit (primary) & 0.1274 & 0.1808 & 0.3305 & 0.1024 \\
Commit (descriptive) & 0.2221 & 0.3300 & 0.3936 & 0.1751 \\
Job (descriptive) & 0.2329 & 0.3410 & 0.3934 & 0.1836 \\
\bottomrule\end{tabular}
\caption{Weighting sensitivity on the same saved 3,321 predictions per seed. The first row remains the registered primary measure. The other rows are descriptive; neither introduces new jobs, retraining, or a replacement success rule.}
\label{tab:ci-weighting}
\end{table}
The absolute scores depend on repository weighting, but the ordering HEDGE4, reset, keep, permuted is preserved under all three aggregations (Table~\ref{tab:ci-weighting}). Thus the observed harm of continuous retention is not removed by weighting commits or jobs globally.

\section{Separating Update Magnitude from Parameter History}
\label{sec:diagnostic}
The CI replay records when predictions begin to degrade but does not retain every intermediate gradient or support loss. We therefore conducted one fixed follow-up intervention using the same trained slow states and a declared early prefix. It is a mechanism study on the already observed audit queue.

The intervention crosses two choices. \texttt{CARRY} preserves the fast parameters across feedback batches; \texttt{LATEST} restores them to the bound $\rho$ before each batch. The second factor either applies the original SGD increment or multiplies its actual FP32 parameter difference by 0.1. We denote these settings by \texttt{1} and \texttt{TENTH}. The latter is defined on the realized parameter difference, rather than assumed to be bitwise identical to another learning-rate implementation. Memory, arrived feedback, and private random streams retain their prescribed lifetimes. \texttt{RESET} always reads the bound slow initialization.

The prefix includes all 83 NumPy jobs and 653 pandas jobs, totaling 736 predictions per seed. By its cutoff, 422 pandas labels have arrived in 128 batches; 231 remain pending. The pandas prefix intersects 13 commits, of which six are complete. We restrict the original full-queue job weights to this prefix and renormalize within each repository. This produces a prefix measure, not a new full-group test set. Saved HEDGE4 predictions are reused on the same jobs. The repeated \texttt{CARRY\_1} and \texttt{RESET} probabilities exactly match the original saved probabilities for all seeds.

\begin{table}[ht]
\centering\small
\begin{tabular}{lrrrrr}\toprule
Update path & 2811 & 2812 & 2813 & Mean & Seed SD\\\midrule
\texttt{CARRY\_1} & 0.3577 & 0.1925 & 0.3400 & 0.2967 & 0.0907 \\
\texttt{CARRY\_TENTH} & 0.2847 & 0.3384 & 0.3382 & 0.3204 & 0.0310 \\
\texttt{LATEST\_1} & 0.2666 & 0.2488 & 0.3115 & 0.2756 & 0.0323 \\
\texttt{LATEST\_TENTH} & 0.2223 & 0.1748 & 0.2713 & 0.2228 & 0.0482 \\
\texttt{RESET} & 0.1775 & 0.2214 & 0.2302 & 0.2097 & 0.0282 \\
\texttt{HEDGE4} & 0.1679 & 0.1679 & 0.1679 & 0.1679 & 0.0000 \\
\bottomrule\end{tabular}
\caption{Fixed-prefix diagnostic, half-Brier loss. The primary comparison is \texttt{CARRY\_1} minus \texttt{CARRY\_TENTH}. All paths use the same prefix. Their losses use a different evaluation population from Table~\ref{tab:ci}.}
\label{tab:diagnostic}
\end{table}
Reducing the carried update has a mean gain of $-0.0237$, with per-seed gains 0.0730, $-0.1459$, and 0.0017. It fails the fixed rule of a mean gain of at least 0.01 with all seeds positive. At the smaller increment, removing accumulated parameter history improves loss by 0.0976 on average and for every seed. At the original increment, the history-removal effect changes sign across seeds. All four active paths have higher mean loss than \texttt{RESET}, and each is worse than HEDGE4. \texttt{LATEST\_TENTH} has the lowest mean loss among the four active paths, followed by \texttt{LATEST\_1}, \texttt{CARRY\_1}, and \texttt{CARRY\_TENTH}.

\subsection{What happens on the first feedback batch?}
The first pandas feedback batch contains eight jobs labeled other. Three more other labels arrive in the next batch. At 01:16:06 UTC, after those two updates, all three original-update seeds have a local retained-state gain below $-0.01$ on the same 37 newly starting jobs. Those jobs later resolve to 24 successes and 13 cancellations. This is a descriptive local marker; the first cumulative crossing occurs after 2, 421, and 25 feedback batches for the three seeds, respectively.

\begin{table}[ht]
\centering\small
\begin{tabular}{lrrrr}\toprule
Seed & Before update & Original increment & Tenth increment & Original increment norm\\\midrule
2811 & 0.513218 & 3.007320 & 0.102798 & 1.590172 \\
2812 & 2.503274 & 2.838060 & 2.805870 & 7.709818 \\
2813 & 0.721480 & 1.350831 & 0.139986 & 2.632058 \\
\bottomrule
\end{tabular}
\caption{Cross-entropy on the same first pandas feedback batch, before and after the numerical update. Increment norm is the Euclidean norm across the two factor tensors.}
\label{tab:first}
\end{table}
The original update increases same-batch cross-entropy for all three seeds (Table~\ref{tab:first}). At that point, the feedback itself is not being fit better. The observation identifies local non-descent under the implemented update, which is distinct from a support-loss decrease followed by poor generalization. The smaller increment reduces same-batch loss for two seeds and increases it for the third. The pre- and post-update losses use the same saved prompts, candidate labels, masks, positions, and per-path batch-mean normalization. The pre-update gradient traverses the mixer with non-reentrant activation checkpointing; the post-update measurement uses a no-gradient forward. The substrate stays in evaluation mode, with BF16 computation and FP32 factors. These are matched loss definitions, not a guarantee of identical numerical kernel paths. Step size, factor geometry, model curvature, and finite-precision arithmetic have not been independently separated by this intervention.

Across the full diagnostic prefix, the smaller \texttt{CARRY} path has lower average support loss before and after updates, yet worse mean future half-Brier loss. The selected update batch and the later prediction population thus answer different questions. Feedback delay and class composition offer plausible mechanisms for this divergence, but their individual causal contributions require different interventions. A support-loss decrease alone is insufficient evidence that an online update is useful.

\begin{figure}[htbp]
\centering\includegraphics[width=\linewidth]{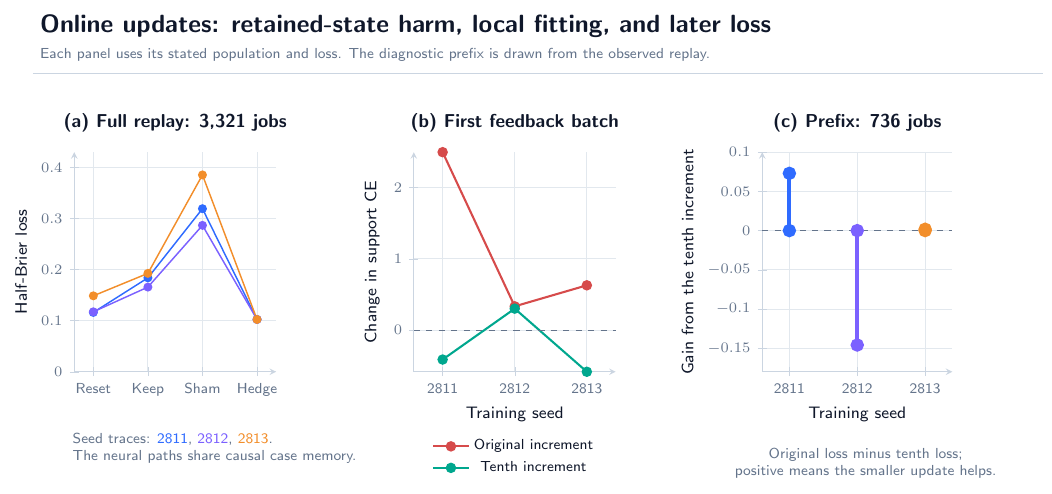}
\caption{\textbf{Three views of online learning.} (a) The full 3,321-job replay compares real-memory paths at reset, retained real-feedback state, permuted-feedback state, and HEDGE4. (b) Same-batch CE change on the first eight pandas feedback jobs: positive values indicate non-descent. (c) The fixed 736-job diagnostic prefix: original carried-update loss minus the smaller carried-update loss, separately for each seed. The latter two panels describe a diagnostic on the observed queue; their populations and losses are distinct from panel (a).}
\label{fig:online-results}
\end{figure}

Read together, the panels in Figure~\ref{fig:online-results} separate three stages of an update's effect. The full replay measures the consequence of retaining a learning trajectory. The first-batch measurement asks whether the update improves the batch that generated its gradient. The prefix intervention changes the realized step magnitude while preserving the comparison population. The original step fails locally at the first batch; shrinking it changes that local behavior without producing a consistent future-risk improvement across seeds. This is the specific failure pattern a subsequent update controller has to address.

\FloatBarrier

\section{Implementation, Cost, and Reproducibility}
The episodic run uses PyTorch 2.11.0, Transformers 5.12.1, CUDA 12.8, and one RTX 4090. The substrate is BF16 and LARC factors are FP32. A physical batch has eight sequences of length 512, including padding. Prompt and answer limits are 384 and 32 tokens, respectively, and programs are scored with their complete end-of-sequence token. The mean-token scoring convention is used in both studies.

\begin{table}[ht]
\centering\small
\begin{tabularx}{\linewidth}{YrrY}\toprule
Study & Slow updates & Entry wall time (s) & Saved learning state\\\midrule
Episodic training and evaluation &1,536&1,634.503&Six slow initializations, version 256\\
CI task fit and complete replay &2,010&13,229.793&Three slow initializations, version 926; six online states\\
CI prefix intervention &0&1,058.808&Six sessions containing all four active fast states\\\bottomrule
\end{tabularx}
\caption{Recorded execution costs. Wall times include model loading, evaluation, CPU work inside the entry, saving, and cleanup. The CI total includes two stopped preflights, an interrupted partial replay, and its completed evaluation. They are not pure GPU kernel times.}
\label{tab:cost}
\end{table}
The episodic preflight also performed two temporary slow updates that were discarded before production. Its total operation ledger contains 11,581 mixer forwards, 6,195 head forwards, and 5,386 backward calls, including preflight, recomputation, and evaluation. The original ledger names its physical and effective position totals \texttt{physical\_tokens} and \texttt{effective\_tokens}: 49,025,792 and 19,521,408. They add positions charged by different operations. Table~\ref{tab:positions} accounts for the physical total exactly. The effective total uses non-padding positions under the same operation accounting. These totals are neither mixer-input tokens alone nor FLOP-equivalent work or independent training examples.

\begin{table}[htbp]
\centering\small
\begin{tabular}{lrrr}\toprule
Operation & Calls & Positions per call & Ledger contribution\\\midrule
Mixer, including recomputation & 11,581 & $8\times512$ & 47,435,776\\
Selected-answer head & 6,195 & $8\times32$ & 1,585,920\\
Native reference & 1 & $8\times512$ & 4,096\\\midrule
Total & & & 49,025,792\\\bottomrule
\end{tabular}
\caption{Decomposition of the episodic physical-position ledger. Head and native-reference charges explain the 1,590,016 positions beyond the mixer subtotal. Backward calls are counted separately.}
\label{tab:positions}
\end{table} Data construction costs and source-review costs are recorded separately from the execution entries.

The CI run encountered a pending-feedback capacity stop after task-fit training. The completed replay reused the saved slow models and restarted the audit from its beginning, counting the repeated evaluation in full. Pending records were compacted to store the two distinct prompt strings, plain and remembered, once per job. The completed configuration allows 2 MiB of pending feedback and 4 MiB of total logical live state per session, including factors, private random streams, memory, and pending metadata. No replacement training was performed to reconstruct their identities. Audit sessions were saved and restored after at least $\min(128,\max(1,\lfloor N_{\rm repo}/2\rfloor))$ audit labels had arrived. Restoration checked the complete pending queue, numerical states, memory, counters, and random states; final sessions were also reloaded. The prefix intervention performed 2,472 fast-state updates and zero slow updates. Its same-batch post-update measurements are included in its recorded cost.

The reproducibility package records the slow substrate, semantic adapter, input-residual placement, trained factors, optimizer state, and random-number state separately. Source manifests and checkpoint identities disambiguate the several historical residual implementations. The manuscript's numerical tables are generated from saved result artifacts. The paper and source archive are accompanied by an evidence map; access to the frozen base model and the original semantic adapter is required to execute the trained residuals.

\subsection{The state boundary in an implementation}
A minimal implementation needs three operations. \emph{Begin} copies the factors from the requested slow version and initializes the episode's private update state. \emph{Update} consumes only feedback that has arrived, computes its declared loss through the frozen reader, and replaces that episode's fast factors. \emph{Read} evaluates the current prompt and candidates using the designated factors. Keeping these operations distinct makes the update lifetime visible in the code and in the resulting checkpoint.

The inner optimizer here is functional SGD, so there are no momentum tensors to carry between steps. Factors, step count, and the private random stream still belong to the episode. The outer AdamW optimizer owns different state attached to $\rho$. Slow versions prevent an episode from silently resetting to an initialization that changed after it began. This is especially useful when many independent episodes share a physical batch.

At an input boundary, cached embeddings are reusable only while their tokenization and frozen substrate match. Once LARC has changed the representation, the downstream activations depend on $\Phi$. A cached mixer output or key--value state from another factor version would evaluate a different function from Equation~\eqref{eq:residual}. The implementation therefore recomputes the mixer with the current factors, preserves attention masks and positions, and uses activation checkpointing to trade computation for memory during backpropagation.

In the delayed-feedback setting, the pending record is also part of the learning state. It retains the prompt used for the original prediction and the information needed to score the eventual label against that prediction. Rebuilding the prompt from current memory at completion would train on a different observation. Saving only the factor matrices would likewise leave restoration unable to determine which feedback belongs to which earlier read.

\section{Discussion}
The program-selection study shows that feedback written into this trained rank-4 input residual reduces expected query execution error with the query read otherwise fixed. The target is already among four supplied programs, and the direct support-loss rule remains much more accurate. Within that setting, the retained factors provide a measurable path from feedback to a better probability distribution. Comparing final errors separately from gains over reset also clarifies why both objectives support adaptation, while the fixed worst-seed stability rule selects static training despite its higher mean endpoint error.

The continuous-integration study changes the learning regime from two updates on a small controlled support set to accumulated updates with delayed labels. In this repository-local retrospective queue, the repository-balanced result favors the trained reset reference over retained online state, and HEDGE4 remains stronger than the full neural path. The follow-up intervention records both same-batch non-descent and a lack of consistent future benefit from shrinking updates. It makes update size and parameter history concrete design questions, while leaving the causes of failure unresolved. Rank alone controls neither local descent nor usefulness on later predictions.

The available evidence covers one frozen language-model substrate, one rank, finite candidate decisions, and three training seeds. Comparisons with untrained initialization on the same task groups, alternative residual ranks and insertion boundaries, and matched weight-space LoRA or nonlinear adapters remain unrun. The results characterize the evaluated LARC realization and do not measure a performance advantage over those alternatives. The program results are development evidence; the CI intervention reuses an observed audit population.

A learning-state specification must describe both the current prediction and the next update. Resuming factor-space learning requires the factors and any optimizer state; the product $BA$ describes the current linear correction alone. A trained reset reference measures the effect of retaining updates, and a strong direct-feedback comparator measures how useful the resulting policy is. Within MMLA, these findings motivate evaluating a controller that uses arrived-batch fitting and later outcomes to decide whether to retain an update. Such a controller remains a next study, with independent task groups and the existing comparators needed to assess it.

\section{Relation to Existing Methods}
\paragraph{Residual learning and compact adaptation.}
Residual networks learn a correction relative to a reference path~\cite{he2015deepresiduallearningimage}. ReZero initializes residual branches through a zero-valued scalar gate~\cite{bachlechner2020rezeroneedfastconvergence}. Adapter modules train compact additions to a frozen pretrained network~\cite{houlsby2019parameterefficienttransferlearningnlp}. LoRA parameterizes weight corrections with low-rank factors~\cite{hu2021loralowrankadaptationlarge}; Section~\ref{sec:algebra} gives the linear relationship to an input-side residual and identifies the insertion boundary used here.

\paragraph{Fast weights and learning an initialization.}
Fast weights provide an intermediate timescale between neural activities and slowly trained parameters~\cite{ba2016usingfastweightsattend}. MAML trains initial parameters for rapid task adaptation~\cite{finn2017modelagnosticmetalearningfastadaptation}, and its first-order approximation omits support-loss Hessians~\cite{nichol2018firstordermetalearningalgorithms}. Equation~\eqref{eq:fo} applies that estimator to the input-residual factors, with ordinary query-risk training as the matched static objective.

Block et al.~\cite{block2025provablemetalearninglowrankadaptations} study retraining objectives that prepare model weights for later low-rank adaptation and give performance guarantees in a linear setting. ABMLL~\cite{zhang2026metalearningscalelargelanguage} uses low-rank forms to model global and task-specific parameter distributions for Bayesian adaptation across datasets. Here the pretrained substrate is fixed, the learned initialization is a deterministic pair of input-residual factors, and the experiments measure feedback-retention effects with real/sham and keep/reset interventions. These differences specify the objects trained and evaluated in the respective studies.

\paragraph{Learning during inference.}
Test-time training adapts model parameters through a self-supervised objective at test time~\cite{sun2020testtimetrainingselfsupervisiongeneralization}. TTT layers instead make the sequence state a model that is updated by learning~\cite{sun2025learninglearntesttime}. Titans develops neural memory that learns historical context~\cite{behrouz2024titanslearningmemorizetest}. Our experiments use external execution feedback or completed-job labels, with state resets that isolate their effect. Within MMLA, LARC occupies the numerical policy plane; explicit stored records retain a separate memory lifetime.

\section{Conclusion}
LARC realizes a numerical policy state with the low-rank map $h+BAh$, a trained starting point, private feedback updates, and defined reset and resume operations. In the four-candidate development task, two gradient steps reduce expected query execution error relative to each trained reset, while a direct support-loss rule remains much more accurate. In the CI replay, retained updates increase later loss, and the diagnostic also records local non-descent. These measurements connect the residual's factor-space dynamics to the practical question of when a small numerical learning state should be retained.

\bibliographystyle{plain}
\bibliography{references}

\appendix
\section{Derivation of Factor-Space Dynamics}
For a differentiable scalar loss, $d\ell=\langle G_C,dC\rangle_F$ and $dC=(dB)A+B(dA)$. Cyclic permutation in the trace gives Equation~\eqref{eq:gradient}. A simultaneous gradient step at any $(A,B)$ yields
\begin{align}
 A_+&=A-\eta B^\top G_C,\quad B_+=B-\eta G_C A^\top,\\
 B_+A_+&=BA-\eta BB^\top G_C-\eta G_C A^\top A+\eta^2G_C A^\top B^\top G_C.
\end{align}
Substituting $(cA,B/c)$ proves Equation~\eqref{eq:gauge}. The effective matrix step has two state-dependent preconditioning terms. In particular, two states with identical $BA$ need not have identical subsequent products. Optimizer moments introduce further coordinate dependence.

For the first-step expectation, write $(A_0^\top A_0)_{ij}=\sum_{q=1}^r A_{qi}A_{qj}$. Independence and zero means make off-diagonal expectations zero, while each diagonal expectation is $r\sigma^2$. Because the initial residual is exactly zero, the downstream matrix gradient is independent of the sampled factors whenever the loss depends on them only through $C$. Factor regularizers or factor-dependent stochastic operations would require a separate treatment.

Equation~\eqref{eq:descent} follows from $L$-smoothness:
$\ell(\Phi+u)\le\ell(\Phi)+\langle\nabla\ell(\Phi),u\rangle+\frac L2\|u\|^2$,
with $u=-\eta\nabla\ell(\Phi)$. The observed non-descent in Table~\ref{tab:first} concerns the actual finite step and numerical implementation; it does not contradict this conditional bound.

\section{Episodic Learning Procedure}
\begin{enumerate}
\item Initialize paired slow factors with the same nonzero Gaussian $A$ and zero $B$ for each training seed. Fix the substrate, data split, candidate order, losses, optimizers, and update counts.
\item For each outer batch, bind the current version of $\rho$ and create two private fast states. Obtain the same support feedback and query targets for both training objectives.
\item Evaluate all four candidate programs and perform two support-risk SGD updates per fast state. Preserve episode isolation across sequence rows and random-number streams.
\item Evaluate the static query loss at $\rho$ or the adapted query loss at Equation~\eqref{eq:fo}. Aggregate the two episodes, apply one clipped AdamW update to $\rho$, and increment its version.
\item Save actual factors and optimizer/random state at declared boundaries. Complete all six trajectories before producing development predictions.
\item Evaluate real/sham feedback crossed with keep/reset. Save predictions before opening development query targets. Compute group-weighted risks and paired group intervals, then apply the fixed selection rule.
\end{enumerate}

\section{Task and Protocol Details}
\subsection{Finite parameter support}
Arithmetic scales range over $\{-7,\ldots,-1,1,\ldots,7\}$ and offsets over $\{-12,\ldots,12\}$. Clipping bounds are fixed at $[-16,16]$. List minima range over $\{-6,\ldots,6\}$ and limits over $\{1,\ldots,6\}$. A suffix is assigned by a fixed group-specific random seed, but it does not define a new independent active-parameter group. Arithmetic inputs are integers in $[-48,48]$; list inputs have six entries in $[-12,12]$. Support and query inputs are sampled without duplicate inputs within an episode and with disjoint support/query sets.

Table~\ref{tab:recipes} gives the complete ordered candidate set. Operations are applied from left to right. For arithmetic, \texttt{mul} multiplies by scale, \texttt{add} adds offset, and \texttt{clip} clamps to the fixed lower and upper bounds. For lists, \texttt{filter\_min} preserves entries greater than or equal to minimum in their original order, \texttt{sort} sorts ascending, \texttt{take} retains the first limit entries, and \texttt{append} adds suffix as one final entry. Each parameter group contributes four episodes, one for each possible target recipe; candidate order stays fixed. Support and query outputs are generated by that target program. The target index is withheld from the learner.

\begin{table}[ht]
\centering\small
\begin{tabular}{lll}\toprule
Index & Arithmetic recipe & List recipe\\\midrule
0 & \texttt{mul, add} & \texttt{filter\_min, take}\\
1 & \texttt{add, mul} & \texttt{take, filter\_min}\\
2 & \texttt{clip, mul, add} & \texttt{filter\_min, sort, take, append}\\
3 & \texttt{mul, clip, add} & \texttt{append, filter\_min, sort, take}\\\bottomrule
\end{tabular}
\caption{All eight fixed recipes. A family's four recipes instantiate the four candidate programs using its public parameter values.}
\label{tab:recipes}
\end{table}

The historical exclusion operates on the effective parameter tuple, rather than a newly assigned episode identifier. This prevents relabeling an old transformation as a new task. Model-pretraining overlap is unknown.

\subsection{What reset removes}
In the episodic intervention, reset is applied after the second inner update and before the deterministic query read. It restores the episode's trained $\rho$, clears inner optimizer and cached fast-dependent representations, and restores the private episode random state. Real/reset and sham/reset therefore have equal predictions. The trained $B_\rho$ is retained.

In CI, $P_0M_1$ reads the task-trained initialization with the same causal case memory as $P_1M_1$. The \texttt{LATEST} diagnostic resets only fast parameter history before each feedback batch; it preserves memory and the prescribed random stream. Its post-update state is then read until the next batch. This differs from the always-reset control and from resetting explicit memory.

\subsection{CI observations, labels, and baseline parameters}
The prediction interface contains exactly seven fields: repository name, workflow identifier, workflow path, event type, job name, commit identifier, and recorded start time. The prompt asks for the eventual status and serializes these fields with the selected completed-history cases. Each case exposes its workflow identifier, job name, commit identifier, and arrived result. Current-job completion time and conclusion are absent. Four literal answer strings, \texttt{success}, \texttt{failure}, \texttt{cancelled}, and \texttt{other}, are scored with their end-of-sequence token using the mean-log-probability policy.

The source conclusions \texttt{success}, \texttt{failure}, and \texttt{cancelled} map to labels 0, 1, and 2. Conclusions \texttt{timed\_out}, \texttt{action\_required}, \texttt{neutral}, \texttt{skipped}, and \texttt{stale} map to label 3, \texttt{other}. Noncompleted jobs, unrecognized conclusions, and invalid or missing timestamps are excluded. Commit groups shared across training and audit or exposed in the earlier queue are excluded as groups. The public input is the same for the neural predictors and HEDGE4. The neural path scores label strings, whereas HEDGE4 directly predicts the four class probabilities.

HEDGE4 uses 4,096 keys per bounded statistics map, 1,024 hashed unigram/bigram features, logistic learning rate 0.1, and logistic weight decay $10^{-4}$. Mixture weights are proportional to $\exp(-L_j)$, where $L_j$ is the expert's accumulated prediction-time half-Brier loss observed through arrived feedback. A workflow prior uses Laplace smoothing. The latest workflow/job label receives probability 0.7 and each alternative 0.1; the exact commit/workflow/job label receives 0.97 and each alternative 0.01. Missing keys fall back to the prior. Updates sharing a completion timestamp are processed as a batch.

The calibrated neural control fits four fixed additive label biases by weighted historical cross-entropy with an $\ell_2$ coefficient of $10^{-4}$. It uses CPU float64 L-BFGS, at most 100 iterations and 125 function evaluations, then subtracts the mean bias. Fast updates for that control use the calibrated energies; the bias persists across fast-state replacements. This control tests a fitted class bias together with its prescribed adaptation path.

\Needspace{28\baselineskip}
\subsection{Data and artifact access}
CI observations come from public GitHub Actions run and job metadata for \texttt{numpy/numpy} and \texttt{pandas-dev/pandas}. The study uses recorded job outcomes and timing, without modifying either repository or executing their workflows. Repository--commit groups are the task units. The source archive includes all table fragments, figures, and \texttt{artifact-identities.json}, which records source revisions, checkpoint versions, and full SHA-256 identities. Table~\ref{tab:access} distinguishes the downloadable upstream model from the project artifacts. The program-selection initialization package contains the three selected static $\rho_{256}$ tensors and the original $\omega$, with loader code, lineage, environment, evaluation summary, and an AGPL-3.0 license. The package is assembled locally and is not yet publicly hosted; no Hugging Face repository is available for it at the time of this revision. The later $\rho_{926}$ models are distinct original checkpoints. Scientific source and full continuation states remain in the project's private archive; readers can contact the authors at the addresses on the first page for access. Earlier code and records use the name LRARC for this residual. The identities support exact matching, while public end-to-end rerunning also requires access to those artifacts.

\begin{table}[H]
\centering\small
\begin{tabularx}{\linewidth}{lYY}\toprule
Artifact & Exact identity in accompanying manifest & Access at report preparation\\\midrule
MiniCPM5-1B-SFT & Revision \texttt{a60b37f1}\ldots; base file and tensor hashes & \href{https://huggingface.co/openbmb/MiniCPM5-1B-SFT/tree/a60b37f1fc409c54e1e337b0723aaac6f92dfec0}{Public upstream revision}\\
Frozen semantic $\omega$ & Original and exported file hashes; tensor identity & Local initialization package; not publicly hosted\\
Three static $\rho_{256}$ & Seed-specific original/export hashes & Local initialization package; not publicly hosted\\
Three adapted $\rho_{256}$ & Original checkpoint hashes and versions & Project archive; author access\\
Three CI $\rho_{926}$ & Original checkpoint hashes; parent lineage & Project archive; author access\\
Scientific implementation & Program-selection source revision \texttt{7a8e7b77}\ldots; CI source revision \texttt{f8a8ed95}\ldots & Private project repository; author access\\
Paper and evidence summaries & Self-contained LaTeX, generated tables, identities & Accompanying source archive\\\bottomrule
\end{tabularx}
\caption{Artifact availability and identity. An identity record does not imply that a private artifact is publicly hosted. The frozen upstream model retains its own distribution terms.}
\label{tab:access}
\end{table}
\end{document}